\documentclass[letterpaper, 10 pt, conference]{ieeeconf}
\IEEEoverridecommandlockouts
\overrideIEEEmargins
\usepackage{color}
\usepackage[utf8]{inputenc}
\usepackage[english]{babel}
\usepackage{amsmath} 

\usepackage{amsthm}
\usepackage{amssymb}  
\usepackage{multicol}
\usepackage{lipsum}
\usepackage{xcolor}
\usepackage{graphicx}
\usepackage{subfig}
\usepackage{placeins}
\newtheorem{lem}{Lemma}
\newtheorem{thm}{Theorem}

\usepackage{nowidow}
\usepackage{flushend}

\newcommand{\R}{\mathbb{R}}

\newcommand{\E}{\mathbb{E}}

\newcommand{\setX}{{\cal X}}
\newcommand{\setU}{{\cal U}}

\title{Local Policy Optimization for Trajectory-Centric Reinforcement Learning}
\author{Patrik Kolaric$^1$, Devesh K. Jha$^2$, Arvind U. Raghunathan$^2$, Frank L. Lewis$^1$,\\ Mouhacine Benosman$^2$, Diego Romeres$^2$ Daniel Nikovski$^2$
	\thanks{$^1$UTA Research Institute, University of Texas at Arlington, Fort Worth, TX,USA (Email: \texttt{patrik.kolaric@mavs.uta.edu,
lewis@uta.edu }}
    \thanks{$^2$ Mitsubishi Electric Research Laboratories (MERL), Cambridge, MA, USA. Email: \texttt{\{jha,raghunathan,benosman,romeres,nikovski\}@merl.com}}%
	}

\begin{document}

\maketitle
\thispagestyle{empty}
\pagestyle{empty}

\begin{abstract}
The goal of this paper is to present a method for simultaneous trajectory and local stabilizing policy optimization to generate local policies for trajectory-centric model-based reinforcement learning (MBRL). This is motivated by the fact that global policy optimization for non-linear systems could be a very challenging problem both algorithmically and numerically. However, a lot of robotic manipulation tasks are trajectory-centric, and thus do not require a global model or policy. Due to inaccuracies in the learned model estimates, an open-loop trajectory optimization process mostly results in very poor performance when used on the real system.  Motivated by these problems, we try to formulate the problem of trajectory optimization and local policy synthesis as a single optimization problem. It is then solved simultaneously as an instance of nonlinear programming. We provide some results for analysis as well as achieved performance of the proposed technique under some simplifying assumptions.
\end{abstract}
\section{Introduction}\label{sec:introduction}
 Reinforcement learning (RL) is a learning framework that addresses sequential decision-making problems, wherein an `agent' or a decision maker learns a policy to optimize a long-term reward by interacting with the (unknown) environment. At each step, the RL agent obtains evaluative feedback (called reward or cost) about the performance of its action, allowing it to improve the performance of subsequent actions \cite{sutton1998reinforcement,vrabie2013optimal}. Although RL has witnessed huge successes in recent times~\cite{silver2016mastering,silver2017mastering}, there are several unsolved challenges, which restrict the use of these algorithms for industrial systems. In most practical applications, control policies must be designed to satisfy operational constraints, and a satisfactory policy should be learnt in a data-efficient fashion~\cite{vamtutorial}.

Model-based reinforcement learning (MBRL) methods~\cite{deisenroth2011pilco} learn a model from exploration data of the system, and then exploit the model to synthesize a trajectory-centric controller for the system~\cite{levine2013guided}. These techniques are, in general, harder to train, but could achieve good data efficiency~\cite{levine2016end}. Learning reliable models is very challenging for non-linear systems and thus, the subsequent trajectory optimization could fail when using  inaccurate models. However, modern machine learning methods such as Gaussian processes (GP), stochastic neural networks (SNN), etc. can generate uncertainty estimates associated with predictions~\cite{rasmussen2003gaussian,romeres2019semiparametrical}. These uncertainty estimates could be used to estimate the confidence set of system states at any step along a given controlled trajectory for the system. The idea presented in this paper considers the stabilization of the trajectory using a local feedback policy that acts as an attractor for the system in the known region of uncertainty along the trajectory~\cite{tedrake2010lqr}.

We present a method for simultaneous trajectory optimization and local policy optimization, where the policy optimization is performed in a neighborhood (local sets) of the system states along the trajectory. These local sets could be obtained by a stochastic function approximator (e.g., GP, SNN, etc.) that used to learn the forward model of the dynamical system. The local policy is obtained by considering the worst-case deviation of the system from the nominal trajectory at every step along the trajectory. Performing simultaneous trajectory and policy optimization could allow us to exploit the modeling uncertainty as it drives the optimization to regions of low uncertainty, where it might be easier to stabilize the trajectory. This allows us to constrain the trajectory optimization procedure to generate robust, high-performance controllers. The proposed method automatically incorporates state and input constraints on the dynamical system.

\textbf{Contributions.} The main contributions of the current paper are:
\begin{enumerate}
    \item We present a novel formulation of simultaneous trajectory optimization and time-invariant local policy synthesis for stabilization.
    \item We present analysis of the proposed technique  that allows us to analytically derive the gradient of the robustness constraint for the optimization problem.
\end{enumerate}
It is noted that this paper only presents the controller synthesis part for MBRL -- a more detailed analysis of the interplay between model uncertainties and controller synthesis is deferred to another publication.

\section{Related Work}\label{sec:related_work}
MBRL has raised a lot of interest recently in robotics applications, because model learning algorithms are largely task independent and data-efficient~\cite{2019arXiv190702057W, levine2016end, deisenroth2011pilco}. However, MBRL techniques are generally considered to be hard to train and likely to result in poor performance of the resulting policies/controllers, because the inaccuracies in the learned model could guide the policy optimization process to low-confidence regions of the state space. For non-linear control, the use of trajectory optimization techniques such as differential dynamic programming~\cite{jacobson1968new} or its first-order approximation, the iterative Linear Quadratic Regulator (iLQR)~\cite{tassa2012synthesis} is very popular, as it allows the use of gradient-based optimization, and thus could be used for high-dimensional systems. As the iLQR algorithm solves the local LQR problem at every point along the trajectory, it also computes a sequence of feedback gain matrices to use along the trajectory. However, the LQR problem is not solved for ensuring robustness, and furthermore the controller ends up being time-varying, which makes its use somewhat inconvenient for robotic systems. Thus, we believe that the controllers we propose might have better stabilization properties, while also being time-invariant.

Most model-based methods use a function approximator to first learn an approximate model of the system dynamics, and then use stochastic control techniques to synthesize a policy. Some of the seminal work in this direction could be found in~\cite{levine2016end, deisenroth2011pilco}. The method proposed in~\cite{levine2016end} has been shown to be very effective in learning trajectory-based local policies by sampling several initial conditions (states) and then fitting a neural network which can imitate the trajectories by supervised learning. This can be done by using ADMM~\cite{boyd2011distributed} to jointly optimize trajectories and learn the neural network policies. This approach has achieved impressive performance on several robotic tasks~\cite{levine2016end}. The method has been shown to scale well for systems with higher dimensions. Several different variants of the proposed method were introduced later~\cite{chebotar2017path,montgomery2016guided, nagabandi2018neural}. However, no theoretical analysis could be provided for the performance of the learned policies. 

Another set of seminal work related to the proposed work is on the use of sum-of-square (SOS) programming methods for generating stabilizing controller for non-linear systems~\cite{tedrake2010lqr}. In these techniques, a stabilizing controller, expressed as a polynomial function of states, for a non-linear system is generated along a trajectory by solving an optimization problem to maximize its region of attraction~\cite{majumdar2013control}. 

Some other approaches to trajectory-centric policy optimization could be found in~\cite{theodorou2010generalized}. These techniques use path integral optimal control with parameterized policy representations such as dynamic movement primitives (DMPs)~\cite{ijspeert2013dynamical} to learn efficient local policies~\cite{williams2017information}. However, these techniques do not explicitly consider the local sets where the controller robustness guarantees could be provided, either. Consequently, they cannot exploit the structure in the model uncertainty. A workshop version of our paper could be found here~\cite{jha2019robust}.

\section{Problem Formulation}\label{sec:problem_definition}
In this section, we describe the problem studied in the rest of the paper. To perform trajectory-centric control, we propose a novel formulation for simultaneous design of open-loop trajectory and a time-invariant, locally stabilizing controller that is robust to bounded model uncertainties and/or system measurement noise. As we will present in this section, the proposed formulation is different from that considered in the literature in the sense it allows us to exploit sets of possible deviation of a system to design stabilizing controller. 


\subsection{Trajectory Optimization as Non-linear Program}

Consider the discrete-time dynamical system 
\begin{align}
	x_{k+1} 	&=	f(x_k,u_k)			\label{dynamic_diff} 
\end{align}
where $x_k \in \R^{n_x}$, $u_k \in \R^{n_u}$ are the differential 
states and controls, respectively.  The function $f : \R^{n_x+n_u} \rightarrow 
\R^{n_x}$ governs the evolution of the differential states.  
Note that the discrete-time formulation~\eqref{dynamic_diff} can be obtained from 
a continuous time system $\dot{x} = \hat{f}(x,u)$ by using the \emph{explicit Euler} integration 
scheme $(x_{k+1} - x_k) = \Delta t \hat{f}(x_k,u_k)$ where $\Delta t$ is the time-step for  
integration.  


For clarity of exposition we have limited our focus to discrete-time dynamical 
systems of the form in~\eqref{dynamic_diff} although the techniques we describe can be easily extended to implicit discretization 
schemes.  

In typical applications the states and controls are restricted to lie in sets 
$\setX := \{ x \in \R^{n_x} \,|\, \underline{x} \leq x \leq \overline{x} \} \subseteq \R^{n_x}$ and 
$\setU := \{ u \in \R^{n_u} \,|\, \underline{u} \leq u \leq \overline{u} \} \subseteq \R^{n_u}$, \emph{i.e.}
$x_k \in \setX, u_k \in \setU$. 
We use $[K]$ to denote the index set $\{0,1,\dots, K\}$. 
Further, there may exist nonlinear inequality constraints of the form
\begin{equation}
	g(x_k) \geq 0 \label{stateineq}
\end{equation}
with $g : \R^{n_x} \rightarrow \R^{m}$.  The inequalities in~\eqref{stateineq} are termed as \emph{path constraints}.
The trajectory optimization problem is to manipulate the controls $u_k$ over a certain number of time steps $[T-1]$ 
so that the resulting trajectory $\{x_k\}_{k \in [T]}$  minimizes a cost function 
$c(x_k,u_k)$.  Formally, we aim to solve the \emph{trajectory optimization problem}
\begin{equation}
	\begin{aligned}
		\min\limits_{x_k,u_k}		&\,	\sum\limits_{k \in [T]} c(x_k,u_k) \\
		\text{s.t.} 	&\,	\text{Eq.}~\eqref{dynamic_diff}-\eqref{stateineq} \text{ for } k \in [T] \\
				&\,	x_0  = \tilde{x}_0 \\
				&\,	x_{k} \in \setX \text{ for } k \in [T] \\
				&\,	u_k \in \setU \text{ for } k \in [T-1] 
	\end{aligned}\tag{TrajOpt}\label{trajopt}
\end{equation}
where $\tilde{x}_0$ is the differential state at initial time $k = 0$.  Before introducing the main problem of interest, we would like to introduce some notations. 

In the following text, we use the following shorthand notation, $||v||^2_M=v^TMv$. We denote the nominal trajectory  as $X \equiv x_0, x_1, x_2, x_3,\dots,x_{T-1},x_T$, $U \equiv u_0, u_1, u_2, u_3, ..., u_{T-1}$. The actual trajectory followed by the system is denoted as $\hat X \equiv \hat x_0, \hat x_1, \hat x_2, \hat x_3,\dots,\hat x_{T-1},\hat x_T$. We denote a local policy as $\pi_W$, where $\pi$ is the policy and $W$ denotes the parameters of the policy. The trajectory cost is also sometimes denoted as $J= \sum\limits_{k \in [T]} c(x_k,u_k)$.

\subsection{Trajectory Optimization with Local Stabilization}\label{sec:robust_problem_def}
This subsection introduces the main problem of interest in this paper. A schematic of the problem studied in the paper is also shown in Figure~\ref{fig:problem_definition}. In the rest of this section, we will describe how we can simplify the trajectory optimization and local stabilization problem and turn it into an optimization problem that can be solved by standard non-linear optimization solvers.
\begin{figure}
	\flushleft
	\includegraphics[width=1.0\columnwidth]{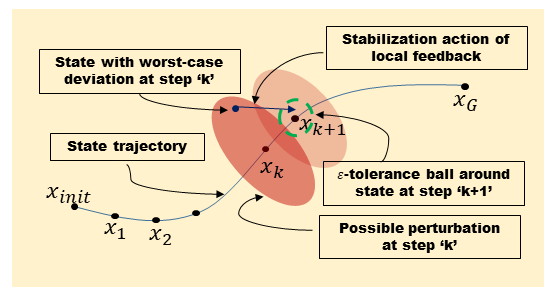}
	\caption{A schematic representation of the time-invariant local control introduced in this paper.}
	\label{fig:problem_definition}
\vspace{-7 mm}
\end{figure}

Consider the case where the system dynamics, $f$ is only partially known, and the known component of $f$ is used to design the controller. Consider the deviation of the system at any step '$k$' from the state trajectory $X$ and denote it as $\delta x_k \equiv x_k - \hat {x}_k$. We introduce a local (time-invariant) policy $\pi_W$ that regulates the local trajectory deviation $\delta x_k$ and thus, the final controller is denoted as $\hat u_k = {u}_k + \pi_W(\delta x_k)$. The closed-loop dynamics for the system under this control is then given by the following:
\begin{equation}\label{eqn:closed_loop_dynamics}
\hat x_{k+1} = f(\hat x_k, \hat u_k)=f({x}_k + \delta x_k, {u}_k + \pi_W(\delta x_k))
\end{equation}
The main objective of the paper is to find the time-invariant feedback policy $\pi_W$ that can stabilize the open-loop trajectory $X$ locally within $\mathbb{R}_k \subset \R^{n_x}$ where $\mathbb{R}_k$ defines the set of uncertainty for the deviation $\delta x_k$. The uncertainty region $\R_k$ can be approximated by fitting an ellipsoid to the uncertainty estimate using a diagonal positive definite matrix $S_k$ such that $\R_k = \{\delta x_k : \delta x_k^T S_k \delta x_k \leq  1\}$. The general optimization problem that achieves that is proposed as:
\begin{equation}\label{generalFormulation}
\begin{aligned}
J^* = & \underset{{U}, {X}, W}{min}\; \underset{\delta x_k \in \R_k}{\E}[ J(X+\delta X, U+\pi_W(\delta x_k)]\\
&{x}_{k+1} = \hat{f}({x}_k, {u}_k)\\ 
\end{aligned}
\end{equation}
where $\hat f(\cdot,\cdot)$ denotes the known part of the model. Note that in the above equation, we have introduced additional optimization parameters corresponding to the policy $\pi_W$ when compared to \ref{trajopt} in the previous section. However, to solve the above, one needs to resort to sampling in order to estimate the expected cost. Instead we introduce a constraint that solves for the worst-case cost for the above problem.

\textbf{Robustness Certificate.} The robust trajectory optimization problem is to minimize the trajectory cost while at the same time satisfying a \textit{robust constraint} at every step along the trajectory. This is also explained in Figure~\ref{fig:problem_definition}, where the purpose of the local stabilizing controller is to push the max-deviation state at every step along the trajectory to $\epsilon$-tolerance balls around the trajectory. Mathematically, we express the problem as following:
\begin{equation}
	\begin{aligned}
		\min\limits_{x_k,u_k,W}		&\,	\sum\limits_{k \in [T]} c(x_k,u_k) \\
		\text{s.t.} 	&\,	\text{Eq.}~\eqref{dynamic_diff}-\eqref{stateineq} \text{ for } k \in [T] \\
				&\,	x_0  = \tilde{x}_0 \\
				&\,	x_{k} \in \setX \text{ for } k \in [T] \\
				&\,	u_k \in \setU \text{ for } k \in [T-1] \\
				&\,	\max\limits_{\delta x_k \in \R_k} ||x_{k+1}-f(x_k+\delta x_k, u_k+\pi_W(\delta x_k))||_2\leq \epsilon_k 
	\end{aligned}\tag{RobustTrajOpt}\label{robusttrajopt}
\end{equation}

The additional constraint introduced in~\ref{robusttrajopt} allows us to ensure stabilization of the trajectory by estimating parameters of the stabilizing policy $\pi_W$. It is easy to see that~\ref{robusttrajopt} solves the worst-case problem for the optimization considered in~\eqref{generalFormulation}. However,~\ref{robusttrajopt} introduces another hyperparameter to the optimization problem, $\epsilon_k$. In the rest of the paper, we refer to the following constraint as the \textit{robust constraint}:
\begin{equation}\label{eqn:robustconstraint}
\max\limits_{\delta x_k^T S_k \delta x_k \leq  1} ||x_{k+1}-f(x_k+\delta x_k, u_k+\pi_W(\delta x_k))||_2\leq \epsilon_k
\end{equation}
Solution of the \textit{robust constraint} for generic non-linear system is out of scope of this paper. Instead, we linearize the trajectory deviation dynamics as shown in the following Lemma.
\begin{lem}
The trajectory deviation dynamics $\delta x_{k+1} = x_{k+1}  - \hat{x}_{k+1}$ approximated locally around the optimal trajectory $(X,U)$ are given by
\begin{equation}
\begin{aligned}
\delta x_{k+1} &= A({x}_k, {u}_k)\cdot \delta x_k + B({x}_k, {u}_k)\cdot \pi_W(\delta x_k)\\
A({x}_k, {u}_k) &\equiv \nabla_{ {x}_k} \hat{f}({x}_k, {u}_k)\\
B({x}_k, {u}_k) &\equiv  \nabla_{{u}_k} \hat{f}({x}_k, {u}_k)
\end{aligned}
\end{equation}
\end{lem}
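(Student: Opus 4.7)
The plan is to recognize this lemma as a standard first-order linearization of the closed-loop dynamics~\eqref{eqn:closed_loop_dynamics} about the nominal trajectory $(X,U)$, and to carry out a single Taylor expansion in the joint variable $(\delta x_k,\pi_W(\delta x_k))$. The only mild preliminaries are to reconcile notation (taking $\delta x_k = \hat x_k - x_k$ so that $\hat x_k = x_k + \delta x_k$, which is the sign convention actually used in~\eqref{eqn:closed_loop_dynamics}) and to adopt the standard MBRL approximation that within the local region of interest the true dynamics $f$ agree with the known model $\hat f$, so that $x_{k+1}=\hat f(x_k,u_k)$ may be subtracted from the closed-loop update.

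First, I would write out the difference
\[
\delta x_{k+1} \;=\; \hat x_{k+1}-x_{k+1} \;=\; \hat f\!\bigl(x_k+\delta x_k,\, u_k+\pi_W(\delta x_k)\bigr) \;-\; \hat f(x_k,u_k).
\]
Second, assuming $\hat f$ is $C^1$ in a neighborhood of $(x_k,u_k)$ and $\pi_W$ is continuous with $\pi_W(0)=0$ (so that zero deviation produces zero corrective action, which is implicit in the setup), I would apply the multivariable mean-value / Taylor expansion
\[
\hat f(x_k+\Delta x,\, u_k+\Delta u) \;=\; \hat f(x_k,u_k) \;+\; \nabla_{x_k}\hat f(x_k,u_k)\,\Delta x \;+\; \nabla_{u_k}\hat f(x_k,u_k)\,\Delta u \;+\; r(\Delta x,\Delta u),
\]
with $\Delta x = \delta x_k$, $\Delta u = \pi_W(\delta x_k)$, and remainder $r=O(\|\Delta x\|^2+\|\Delta u\|^2)$. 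Substituting, the constant terms cancel and, after identifying $A(x_k,u_k):=\nabla_{x_k}\hat f(x_k,u_k)$ and $B(x_k,u_k):=\nabla_{u_k}\hat f(x_k,u_k)$, I obtain the claimed linear form up to the neglected remainder.

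Third, I would justify dropping the remainder: inside the uncertainty set $\delta x_k^\top S_k\delta x_k\le 1$ with $S_k$ large (i.e.\ a small ellipsoid around the nominal state), $\|\delta x_k\|$ is small, and if $\pi_W$ is Lipschitz near the origin then $\|\pi_W(\delta x_k)\|=O(\|\delta x_k\|)$, so $r=O(\|\delta x_k\|^2)$ and is negligible relative to the linear terms. This yields exactly the stated approximation and also indicates the regime of validity, which is relevant when~\eqref{eqn:robustconstraint} is evaluated on the linearized model downstream.

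The main obstacle is not analytical but conceptual: I would want to be explicit about what is being assumed in passing from the true $f$ in the closed-loop update to the model $\hat f$ used in the Jacobians, and about the smoothness/Lipschitz hypotheses on $\pi_W$ (so that the composite remainder is genuinely second order). Once those hypotheses are stated, the calculation itself is a one-line Taylor expansion and identification of coefficients.
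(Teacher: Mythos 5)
Your proposal is correct and takes essentially the same route as the paper, whose proof is simply ``Use Taylor's series expansion to obtain the desired expression''; you carry out that first-order expansion explicitly and add the sign-convention, $f$ versus $\hat f$, and remainder-term caveats that the paper leaves implicit.
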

\begin{proof}
Use Taylor's series expansion to obtain the desired expression. 
\end{proof}

To ensure feasibility of the~\ref{robusttrajopt} problem and avoid tuning the hyperparameter $\epsilon_k$, we make another relaxation by removing the \textit{robust constraint} from the set of constraints and move it to the objective function. Thus, the simplified robust trajectory optimization problem that we solve in this paper can be expressed as following (we skip the state constraints to save space).
\begin{equation}
	\begin{aligned}
		\min\limits_{x_k,u_k,W}		&\,	(\sum\limits_{k \in [T]} c(x_k,u_k) +\alpha\sum\limits_{k \in [T]}d_{max,k})\\
		\text{s.t.} 	&\,	\text{Eq.}~\eqref{dynamic_diff}-\eqref{stateineq} \text{ for } k \in [T] \\
	\end{aligned}\tag{RelaxedRobustTrajOpt}\label{relaxedrobusttrajopt}
\end{equation}
where the term $d_{max,k}$ is defined as following after linearization.
\begin{equation}
\begin{aligned}
&d_{max,k} \equiv  \\ 
&\max_{\delta x_k^T S_k \delta x_k \leq  1} ||A({x}_k, {u}_k)\cdot \delta x_k + B({x}_k, {u}_k)\cdot \pi_W(\delta x_k)||_P^2
\end{aligned}\label{dmax}
\end{equation}
Note that the matrix $P$ allows to weigh states differently. In the next section, we present the solution approach to compute the gradient for the~\ref{relaxedrobusttrajopt} which is then used to solve the optimization problem. Note that rthis esults in simultaneous solution to open-loop and the stabilizing policy $\pi_W$.
\section{Solution Approach}\label{sec:Solution_approach}
This section introduces the main contribution of the paper, which is a local feedback design that regulates the deviation of an executed trajectory from the optimal trajectory generated by the optimization procedure. 


To solve the optimization problem presented in the last section, we first need to obtain the gradient information of the robustness heuristic that we introduced. However, calculating the gradient of the robust constraint is not straightforward, because the $max$ function is non-differentiable. The gradient of the robustness constraint is computed by the application of Dankins Theorem~\cite{bertsekas1997nonlinear}, which is stated next. \\
\textbf{Dankin's Theorem:} Let $K \subseteq \R^m$ be a nonempty, closed set and let $\Omega \subseteq  \R^n$ be a nonempty, open set. Assume that the function 
$f : \Omega \times K \rightarrow \R$ is continuous on $\Omega \times K$ 
and that $\nabla_x f(x, y)$ exists and is continuous on 
$\Omega \times K$. Define the function 
$g : \Omega \rightarrow \R \cup \{\infty\}$ by
\[
g(x) \equiv \sup\limits_{y \in K} f(x, y), x \in \Omega
\]
and 
\[
M(x) \equiv \{y \in K \,|\, g(x) = f(x, y) \}.
\]
Let $x \in \Omega$ be a given vector. Suppose that a neighborhood 
${\cal N}(x) \subseteq \Omega$ of $x$ 
exists such that $M(x')$ is nonempty for all $x' \in {\cal N}(x)$ 
and the set $\cup_{x' \in {\cal N}(x)} M(x')$ is bounded. 
The following two statements (a) and (b) are valid.
\begin{enumerate}[(a)]
    \item The function $g$ is directionally differentiable at $x$ 
    and 
    \[
        g'(x;d) = \sup_{y \in M(x)} \nabla_x f(x, y)^T d.
    \]
    \item If $M(x)$ reduces to a singleton, say $M(x) = \{y(x)\}$, 
    then $g$ is G$\hat{\text{a}}$eaux differentiable at $x$ and
    \[
        \nabla g(x) = \nabla_x f(x, y(x)).
    \]
    \end{enumerate}
    
\textbf{Proof} See~\cite{FacchineiPangVol2}, Theorem $10.2.1$.

Dankin's theorem allows us to find the gradient of the robustness constraint by first computing the argument of the maximum function and then evaluating the gradient of the maximum function at the point. Thus, in order to find the gradient of the robust constraint~\eqref{eqn:robustconstraint}, it is necessary to interpret it as an optimization problem in $\delta x_k$, which is presented next. In Section~\ref{sec:robust_problem_def}, we presented a general formulation for the stabilization controller $\pi_W$, where $W$ are the parameters that are obtained during optimization. However, solution of the general problem is beyond the scope of the current paper. Rest of this section considers a linear $\pi_W$ for analysis. \\
\begin{lem}\label{lem: QCQP}
Assume the linear feedback $\pi_W(\delta x_k) = W \delta x_k$. Then, the constraint (\ref{dmax})  is quadratic in $\delta x_k$, 
\begin{equation}\label{eqn:QCQP}
\begin{aligned}
& \max_{\delta x_k} ||M_k \delta x_k||_P^2 = \max_{\delta x_k} \delta x_k^TM_k^T\cdot P \cdot M_k \delta x_k
\\
& s.t. \quad \delta x_k^T S_k \delta x_k \leq 1
\end{aligned}
\end{equation} where $M_k$is shorthand notation for
\begin{equation}
\begin{aligned}
M_k({x}_k, {u}_k, W) \equiv  A({x}_k, {u}_k) + B({x}_k, {u}_k) \cdot W
\end{aligned}
\end{equation}
\end{lem}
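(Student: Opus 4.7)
The plan is a direct substitution followed by recognizing the resulting form as a quadratically constrained quadratic program (QCQP) in $\delta x_k$. Since this is essentially an algebraic manipulation, I do not expect a major obstacle; the only care needed is in checking the dimensions and confirming that bilinearity collapses into a single linear operator acting on $\delta x_k$.

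First, I would substitute the linear policy assumption $\pi_W(\delta x_k) = W \delta x_k$ directly into the expression~\eqref{dmax}. The argument of the norm becomes
\begin{equation*}
A(x_k,u_k)\,\delta x_k + B(x_k,u_k)\cdot W \delta x_k = \bigl(A(x_k,u_k) + B(x_k,u_k)\,W\bigr)\delta x_k,
\end{equation*}
where I have used distributivity of matrix multiplication over the common factor $\delta x_k$. By the definition of $M_k(x_k,u_k,W)$ provided in the statement, this is exactly $M_k \delta x_k$.

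Next, I would expand the weighted squared norm using the shorthand $\|v\|_P^2 = v^T P v$ introduced earlier. Applied to $v = M_k \delta x_k$, this gives
\begin{equation*}
\|M_k \delta x_k\|_P^2 = (M_k \delta x_k)^T P (M_k \delta x_k) = \delta x_k^T \bigl(M_k^T P\, M_k\bigr) \delta x_k,
\end{equation*}
which is manifestly a quadratic form in $\delta x_k$ with symmetric (positive semidefinite, since $P$ is) Hessian $M_k^T P M_k$. The maximization set $\{\delta x_k : \delta x_k^T S_k \delta x_k \le 1\}$ is unchanged from the statement of~\eqref{dmax} and is itself a quadratic sublevel set in $\delta x_k$. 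Combining the two observations yields precisely the QCQP displayed in~\eqref{eqn:QCQP}, completing the argument.

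The only subtlety worth flagging is that the lemma's claim is purely structural: nothing here asserts that the QCQP has a closed-form maximizer or that the resulting $d_{max,k}$ is a smooth function of $(x_k, u_k, W)$. Those properties would need to be addressed separately (for instance, via the S-procedure or an explicit eigenvalue analysis of $S_k^{-1/2} M_k^T P M_k S_k^{-1/2}$) before Dankin's theorem can be invoked on the outer optimization. For the present lemma, however, no such machinery is needed.
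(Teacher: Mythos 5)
Your proposal is correct and follows essentially the same route as the paper's own (much terser) proof: substitute the linear policy into~\eqref{dmax}, factor the common $\delta x_k$ to identify $M_k = A + BW$, and expand $\|M_k\delta x_k\|_P^2$ as the quadratic form $\delta x_k^T M_k^T P M_k \delta x_k$ over the ellipsoidal constraint set. Your closing remark that the lemma is purely structural, with the eigenvalue analysis deferred, matches how the paper handles it in Lemma~\ref{lem:max_eigenvalue}.
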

\begin{proof}
Write $d_{max}$ from (\ref{dmax}) as the optimization problem
\begin{equation}
\begin{aligned}
& d_{max} =\\ 
&\max_{\delta x_k} ||A({x}_k, {u}_k)\cdot \delta x_k + B({x}_k, {u}_k)\cdot \pi_W(\delta x_k)||_P^2
\\
& s.t. \quad  \delta x_k^T S_k \delta x_k \leq  1
\end{aligned}
\end{equation}
Introduce the linear controller and use the shorthand notation for $M_k$ to write (\ref{eqn:QCQP}).
\end{proof}

The next lemma is one of the main results in the paper. It connects the robust trajectory tracking formulation~\ref{relaxedrobusttrajopt} with the optimization problem that is well known in the literature.

\begin{lem}\label{lem:max_eigenvalue}
The worst-case measure of deviation $d_{max}$ is 
\begin{equation}\label{max_cstr}
\begin{aligned}
 &d_{max} = 
 \\
 & \lambda_{max}(S_k^{-\frac{1}{2}}M_k^T\cdot P \cdot M_kS_k^{-\frac{1}{2}}) = ||P^{\frac{1}{2}}M_kS_k^{-\frac{1}{2}}||_2^2
\end{aligned}\nonumber
\end{equation}\nonumber
where $\lambda_{max}(\cdot)$ denotes the  maximum eigenvalue of a matrix and  $||\cdot ||_2$  denotes the spectral norm of a matrix.
Moreover, the worst-case deviation $\delta_{max}$ is the corresponding maximum eigenvector
\begin{equation}
\begin{aligned}
&\delta_{max} = \\ &\{\delta x_k :  \Big [S_k^{-\frac{1}{2}}M_k^T\cdot P \cdot M_kS_k^{-\frac{1}{2}}\Big  ] \cdot \delta x_k = d_{max} \cdot \delta x_k\}
\end{aligned}
\end{equation} 
\end{lem}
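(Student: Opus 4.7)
The plan is to reduce the quadratically-constrained quadratic problem in Lemma~\ref{lem: QCQP} to a standard Rayleigh quotient by a whitening change of variables, and then read off both $d_{\max}$ and the worst-case direction.

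First I would invoke Lemma~\ref{lem: QCQP} to write
\[
d_{\max} = \max_{\delta x_k^T S_k \delta x_k \le 1} \delta x_k^T M_k^T P M_k \delta x_k .
\]
Since $S_k$ is diagonal positive definite, its symmetric square root $S_k^{1/2}$ exists and is invertible, so I would introduce the change of variables $y := S_k^{1/2}\delta x_k$, equivalently $\delta x_k = S_k^{-1/2} y$. Under this substitution the constraint becomes $y^T y \le 1$, and the objective becomes
\[
y^T \left( S_k^{-1/2} M_k^T P M_k S_k^{-1/2}\right) y .
\]
Call this matrix $A$; it is symmetric positive semidefinite because $P \succeq 0$.

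Next I would apply the standard Rayleigh--Ritz characterization of the largest eigenvalue of a symmetric matrix: the maximum of $y^T A y$ over the unit ball is achieved on the sphere $\|y\|_2=1$ and equals $\lambda_{\max}(A)$, attained at a unit eigenvector $y^\star$ of $A$ corresponding to $\lambda_{\max}$. This immediately gives the first equality $d_{\max} = \lambda_{\max}(S_k^{-1/2} M_k^T P M_k S_k^{-1/2})$. For the second equality, I would set $B := P^{1/2} M_k S_k^{-1/2}$, observe that $B^T B = A$, and invoke the standard identity $\|B\|_2^2 = \lambda_{\max}(B^T B)$ that defines the spectral norm; this yields $d_{\max} = \|P^{1/2} M_k S_k^{-1/2}\|_2^2$.

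For the eigenvector claim, the worst-case deviation in the original coordinates is $\delta x_k^\star = S_k^{-1/2} y^\star$, where $y^\star$ is the top eigenvector of $A$. Equivalently, $\delta x_k^\star$ satisfies the eigenvalue equation for $A$ as stated, and it activates the constraint $\delta x_k^T S_k \delta x_k = 1$ since $\|y^\star\|_2 = 1$. No step here is truly difficult: the only thing to be careful about is to make sure $S_k \succ 0$ is used (so the square root and its inverse are well-defined) and that the transition from $\max$ over the ball to $\max$ over the sphere is justified (it is, because the objective is a positive-semidefinite quadratic form and therefore non-decreasing radially). Thus the main "obstacle" is purely notational bookkeeping between the $\delta x_k$ and $y$ coordinates when stating the eigenvector characterization.
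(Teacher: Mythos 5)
Your proposal is correct and follows essentially the same route as the paper: the whitening substitution $\delta\tilde{x}_k = S_k^{1/2}\delta x_k$, reduction of the ball constraint to the sphere via positive semidefiniteness, and the Rayleigh--Ritz characterization giving the maximum eigenvalue/eigenvector, plus the standard identity $\|B\|_2^2=\lambda_{\max}(B^TB)$ for the spectral-norm form. The only difference is that you spell out the bookkeeping between the $y$ and $\delta x_k$ coordinates slightly more explicitly than the paper does.
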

\begin{proof} 
Apply coordinate transformation $\delta \tilde{x}_k = {S_k}^{\frac{1}{2}}\delta x_k$ in (\ref{eqn:QCQP}) and write
\begin{equation}\label{lin_const2}
\begin{aligned}
& \max_{\delta \tilde{x}_k} \delta \tilde{x}_kS_k^{-\frac{1}{2}}M_k^T\cdot P \cdot M_k S_k^{-\frac{1}{2}}\delta \tilde{x}_k
\\
& s.t. \quad \delta \tilde{x}_k \delta \tilde{x}_k \leq 1
\end{aligned}
\end{equation}
Since $S_k^{-\frac{1}{2}}M_k^T \cdot P \cdot M_k S_k^{-\frac{1}{2}}$ is positive semi-definite, the maximum lies on the boundary of the set defined by the inequality. Therefore, the problem is equivalent to \begin{equation}\label{lin_cost3}
\begin{aligned}
& \max_{\delta \tilde{x}_k} \delta \tilde{x}_kS_k^{-\frac{1}{2}}M_k^T\cdot P \cdot M_k S_k^{-\frac{1}{2}}\delta \tilde{x}_k
\\
& s.t. \quad \delta \tilde{x}_k \delta \tilde{x}_k = 1
\end{aligned}
\end{equation}The formulation (\ref{lin_cost3}) is a special case with a known analytic solution. Specifically, the maximizing deviation $\delta_{max}$ that solves (\ref{lin_cost3}) is the maximum eigenvector of $S_k^{-\frac{1}{2}}M_k^T\cdot P \cdot M_k S_k^{-\frac{1}{2}}$, and the value $d_{max}$ at the optimum is the corresponding eigenvalue.   
\end{proof}

This provides us with the maximum deviation along the trajectory at any step 'k', and now we can use Danskin's theorem to compute the gradient which is presented next.

\begin{thm}\label{thm:gradient_computation}
Introduce the following notation, $\mathcal{M}(z) = S_k^{-\frac{1}{2}}M_k^T(z)\cdot P \cdot M_k(z) S_k^{-\frac{1}{2}}$. The gradient of the robust inequality constraint $d_{max}$ with respect to an arbitrary vector $z$ is
\begin{equation}
\begin{aligned}
\nabla_z d_{max}= \nabla_z \delta_{max}^T\mathcal{M}(z)\delta_{max}
\end{aligned}\nonumber
\end{equation} Where $\delta_{max}$ is maximum trajectory deviation introduced in Lemma 3.
\end{thm}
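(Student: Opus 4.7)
The plan is to reduce the claim directly to Danskin's theorem stated above, with the inner maximization already solved by Lemma~\ref{lem:max_eigenvalue}.

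First I would set up the outer optimization in a form that matches Danskin's template. Using the coordinate change $\delta \tilde x_k = S_k^{1/2}\delta x_k$ from the proof of Lemma~\ref{lem:max_eigenvalue}, the worst-case deviation can be written as
\begin{equation*}
d_{max}(z) \;=\; \max_{\delta \tilde x_k \in K} \; f(z,\delta \tilde x_k), \qquad f(z,\delta \tilde x_k) \;=\; \delta \tilde x_k^T \mathcal{M}(z)\, \delta \tilde x_k,
\end{equation*}
where $K=\{\delta \tilde x_k : \delta \tilde x_k^T\delta \tilde x_k \leq 1\}$ is nonempty, compact, and $z$ is the generic variable (which in the concrete problem will be stacked from $x_k,u_k,W$). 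Here $f$ is a smooth function of $z$ for fixed $\delta \tilde x_k$ because $\mathcal{M}(z)$ depends smoothly on $M_k(z)$, which in turn is smooth in $z$.

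Next I would verify the hypotheses of Danskin's theorem. Continuity of $f$ on $\Omega\times K$ and of $\nabla_z f$ follow from the smoothness of $M_k(z)$. The maximizer set $M(z)=\{\delta \tilde x_k\in K : f(z,\delta \tilde x_k)=d_{max}(z)\}$ is nonempty for every $z$ by compactness of $K$, and is contained in the unit sphere, so $\bigcup_{z'\in \mathcal{N}(z)} M(z')$ is bounded. By Lemma~\ref{lem:max_eigenvalue}, $M(z)$ consists of the unit eigenvectors of $\mathcal{M}(z)$ associated with its largest eigenvalue. Under the mild non-degeneracy assumption that this top eigenvalue is simple at the point of interest, $M(z)=\{\delta_{max}\}$ is a singleton (up to sign, but $\delta_{max}\delta_{max}^T$ and hence $f$ is insensitive to this sign).

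Then I would apply part~(b) of Danskin's theorem to conclude that $d_{max}$ is G\^ateaux differentiable at $z$ and
\begin{equation*}
\nabla_z d_{max}(z) \;=\; \nabla_z f(z,\delta_{max}) \;=\; \nabla_z\!\left[\delta_{max}^T\,\mathcal{M}(z)\,\delta_{max}\right],
\end{equation*}
where the gradient on the right is taken with $\delta_{max}$ treated as a constant (the key content of Danskin's theorem: the implicit $z$-dependence of the maximizer contributes zero to first order). This is exactly the claimed formula.

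The main obstacle I anticipate is the simplicity assumption on the top eigenvalue: if $\lambda_{max}(\mathcal{M}(z))$ is repeated, $M(z)$ is a continuum and only part~(a) of Danskin's theorem applies, yielding a directional derivative $d_{max}'(z;d)=\sup_{\delta_{max}\in M(z)} \nabla_z[\delta_{max}^T\mathcal{M}(z)\delta_{max}]^T d$ rather than a gradient. I would flag this as the regularity condition under which the stated gradient expression is valid, and note that in the numerical implementation one can use any maximizing eigenvector to obtain a valid subgradient direction for the outer nonlinear program.
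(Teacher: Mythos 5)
Your proposal is correct and follows essentially the same route as the paper: the paper's proof simply invokes Danskin's theorem together with Lemma~\ref{lem:max_eigenvalue} to treat the maximizer $\delta_{max}$ as fixed when differentiating, exactly as you do. Your additional verification of Danskin's hypotheses and the explicit caveat that the formula requires the top eigenvalue of $\mathcal{M}(z)$ to be simple (otherwise only the directional derivative of part~(a) is available) are details the paper leaves implicit, and they strengthen rather than alter the argument.
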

\begin{proof}
Start from the definition of gradient of robust constraint
\begin{equation}
\begin{aligned}
\nabla_z d_{max} =\nabla_z\max_{\delta \tilde{x}_k} \delta \tilde{x}_k\mathcal{M}(z)\delta \tilde{x}_k
\end{aligned}\nonumber 
\end{equation} Use Danskin's Theorem and the result from Lemma~\ref{lem:max_eigenvalue} to write the gradient of robust constraint with respect to an arbitrary $z$,
\begin{equation}
\begin{aligned}
\nabla_z d_{max}= \nabla_z \delta_{max}^T\mathcal{M}(z)\delta_{max}
\end{aligned}\nonumber
\end{equation}which completes the proof. 
\end{proof}

The gradient computed from Theorem~\ref{thm:gradient_computation} is used in solution of the~\ref{relaxedrobusttrajopt}-- however, this is solved only for a linear controller. The next section shows some results in simulation and on a real physical system.



\section{Experimental Results}\label{sec:results}
In this section, we present some results using the proposed algorithm for an under-actuated inverted pendulum, as well as on a experimental setup for a ball-and-beam system. We use a Python wrapper for the standard interior point solver IPOPT to solve the optimization problem discussed in previous sections. We perform experiments to evaluate the following questions:
\begin{enumerate}
    \item Can an off-the-shelf optimization solver find feasible solutions to the joint optimization problem described in the paper?
    \item Can the feedback controller obtained by this optimization stabilize the open-loop trajectory in the presence of bounded uncertainties?
    \item How good is the performance of the controller on a physical system with unknown system parameters ?
\end{enumerate}
In the following sections, we try to answer these questions using simulations as well as experiments on real systems.
\subsection{Simulation Results for Underactuated Pendulum}\label{sec:results_simulation}

The objective of this subsection is twofold: first, to provide insight into the solution of the optimization problem; and second, to demonstrate the effectiveness of that solution in the stabilization of the optimal trajectory.
\begin{figure}[!htb] 
	\flushleft
	\includegraphics[width=\columnwidth]{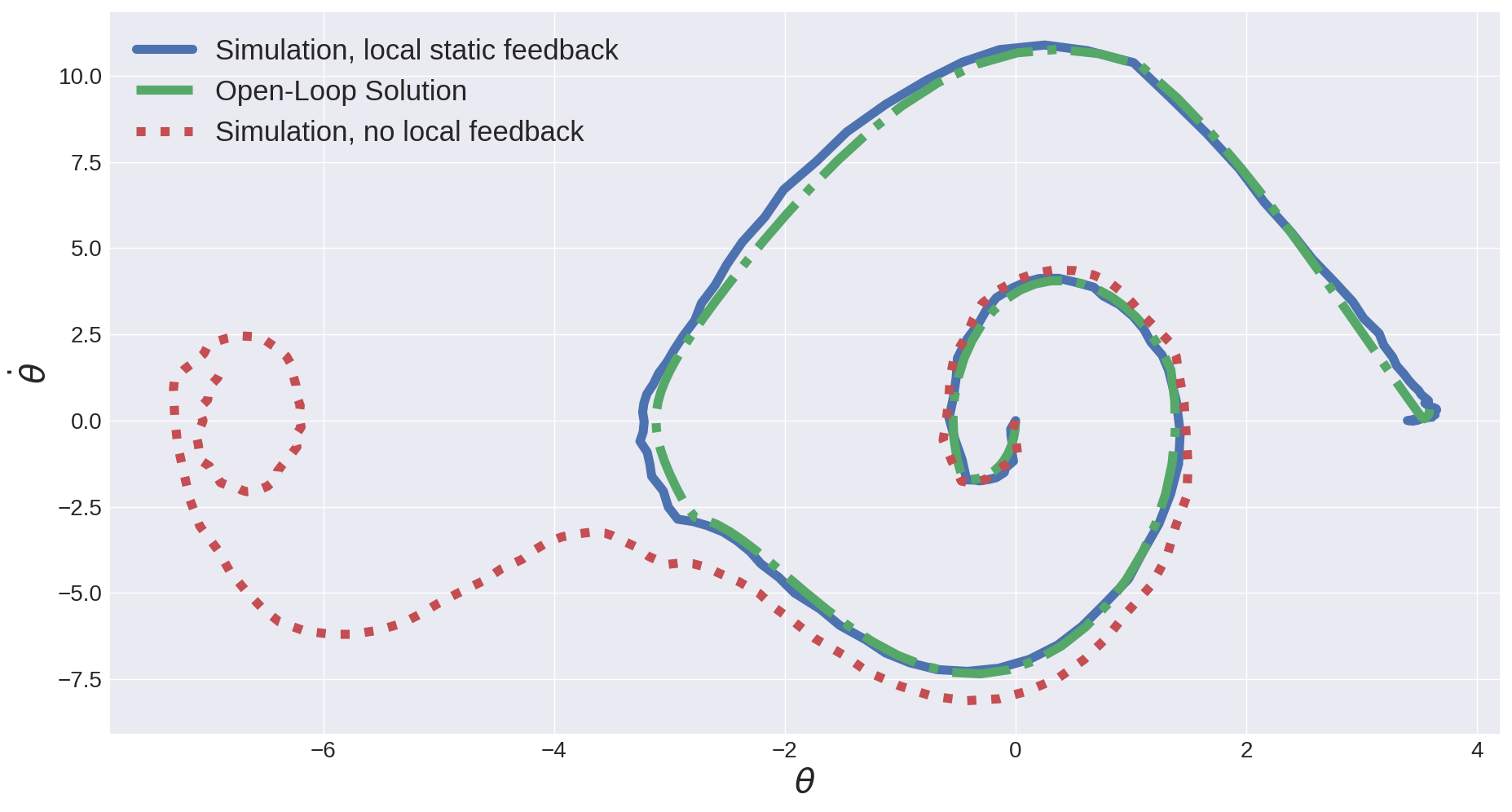}
	\caption{State-space representation of the optimal trajectory (green), stable closed-loop system using the obtained solution (blue), and unstable open-loop trajectory without the local feedback (red). Note that the feedback is time-invariant.}
	\label{fig:state_space}
\vspace{-1 mm}
\end{figure}
For clarity of presentation, we use an underactuated pendulum system, where trajectories can be visualized in state space. The dynamics of the pendulum is modeled as $I\ddot{\theta}+b\dot{\theta} + mgl\cdot sin(\theta) = u$. The continuous-time model is discretized as $(\theta_{k+1}, \dot{\theta}_{k+1}) = f((\theta_k, \dot{\theta}_k),u_k) $. The goal state is $x_g=[\pi, 0]$, and the initial state is $x_0=[0,0]$ and the control limit is $u \in [-1.7, 1.7]$. The cost is quadratic in both the states and input. The initial solution provided to the controller is trivial (all states and control are $0$). The number of discretization points along the trajectory is $N=120$, and the discretization time step is $\Delta t = 1/30$. The cost weight on robust slack variables is selected to be $\alpha = 10$. The uncertainty region is roughly estimated as $x_k^T\begin{bmatrix}
    1.0 & 0.0\\
    0.0 & 5.5
\end{bmatrix}x_k<1$ along the trajectory. Detailed analysis on uncertainty estimation based on Gaussian processes is deferred to future work, due to space limits. The optimization procedure terminates in $50$ iterations with the static solution $W=[-2.501840, -7.38725]$.

The controller generated by the optimization procedure is then tested in simulation, with noise added to each state of the pendulum model at each step of simulation as $x_{k+1} = f(x_k,u_k) + \omega$ with $\omega_\theta \sim \mathcal{U}(-0.2rad, 0.2rad])$ and $\omega_{\dot{\theta}} \sim \mathcal{U}(-0.05rad/s, 0.05rad/s])$. 


We tested the controller over several settings and found that the underactuated setting was the most challenging to stabilize. In Figure~\ref{fig:state_space}, we show the state-space trajectory for the controlled (underactuated) system with additional noise as described above. As seen in the plot, the open-loop controller becomes unstable with the additional noise, while the proposed controller can still stabilize the whole trajectory. 
\begin{figure}[!htb] 
	\flushleft
	\includegraphics[width=\columnwidth]{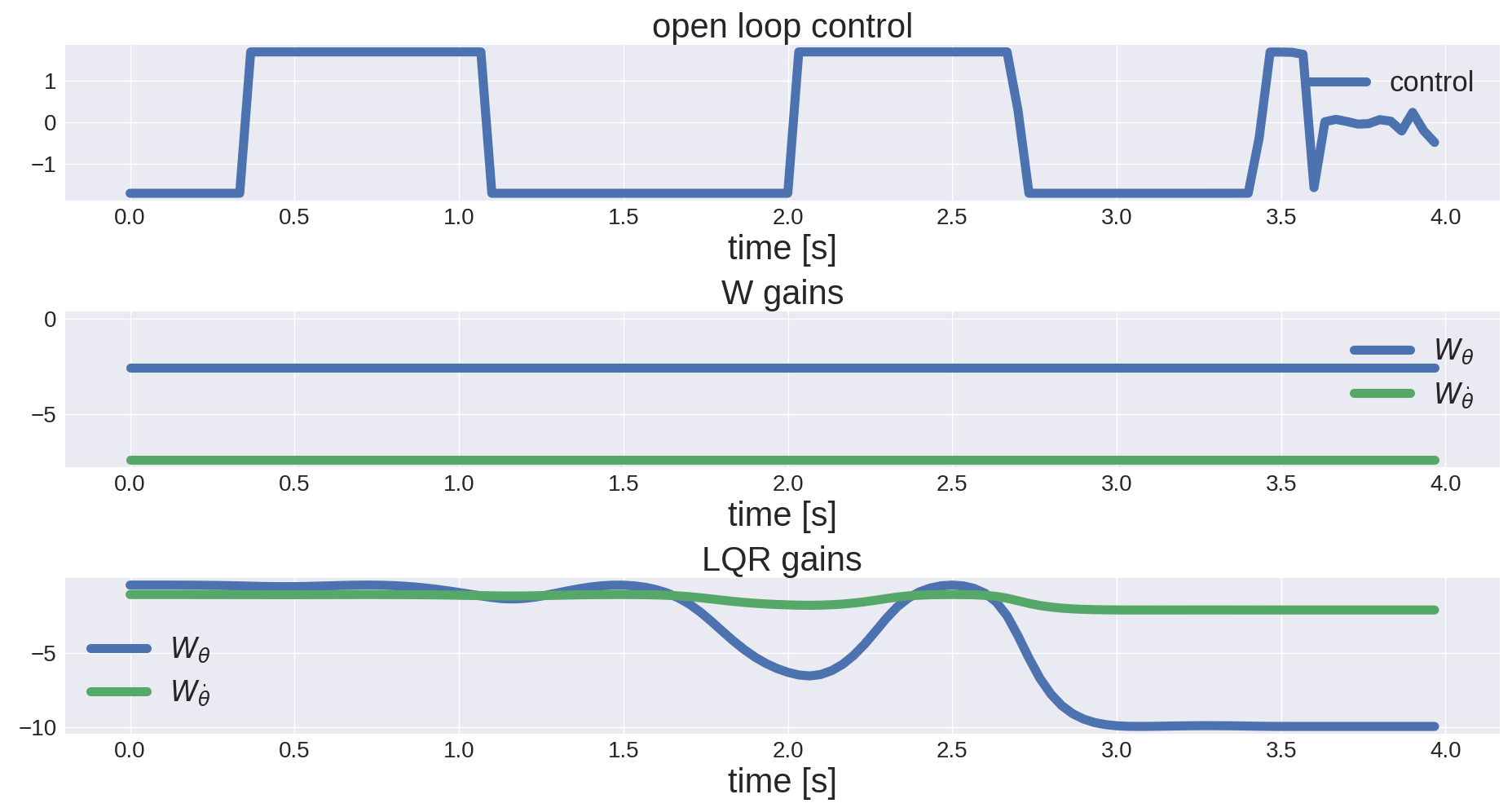}
	\caption{a) open-loop control b) static feedback matrix obtained from optimization c) LQR feedback}
	\label{fig:control_pendulum}
\vspace{-2 mm}
\end{figure}
\FloatBarrier
In Figure~\ref{fig:control_pendulum}, we show the control inputs, the time-invariant feedback gains obtained by the optimization problem. We also the time-varying LQR gains obtained along the trajectory to show provide some insight between the two solutions. As the proposed optimization problem is finding the feedback gain for the worst-case deviation from the trajectory, the solutions are different than the LQR-case. Next, in Figure~\ref{fig:error_pendulum}, we plot the error statistics for the controlled system (in the underactuated setting) over $2$ different uncertainty balls using each $12$ sample for each ball. We observe that the steady-state error goes to zero and the closed-loop system is stable along the entire trajectory. As we are using a linear approximation of the system dynamics, the uncertainty sets are still small, however the results are indicating that incorporating the full dynamics during stabilization could allow to generate much bigger basins of attraction for the stabilizing controller.

\begin{figure}[!htb] 
	\flushleft
	\includegraphics[width=\columnwidth]{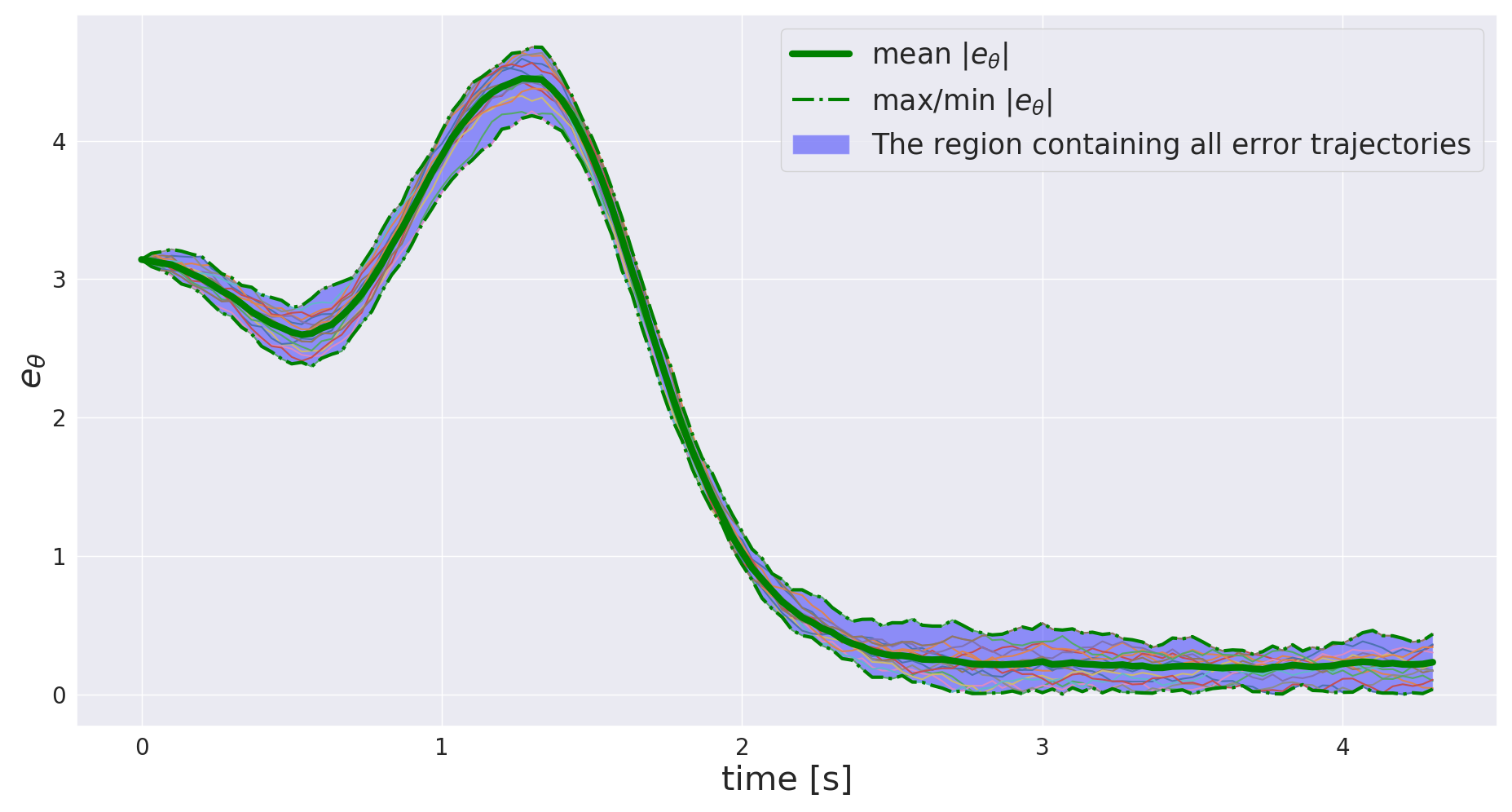}
	\caption{Error statistics for the controlled system using the proposed method on the underactuated pendulum with noise amplitude  }
	\label{fig:error_pendulum}
\vspace{-2 mm}
\end{figure}

\subsection{Results on Ball-and-Beam System}\label{results_ballbeam}
Next, we implemented the proposed method on a ball-and-beam system (shown in Figure~\ref{fig:bb_system})~\cite{8285376}. The ball-and-beam system is a low-dimensional non-linear system with the non-linearity due to the dry friction and delay in the servo motors attached to the table (see Figure~\ref{fig:bb_system}).  
\begin{figure}
    \centering
    \includegraphics[width=0.35\textwidth]{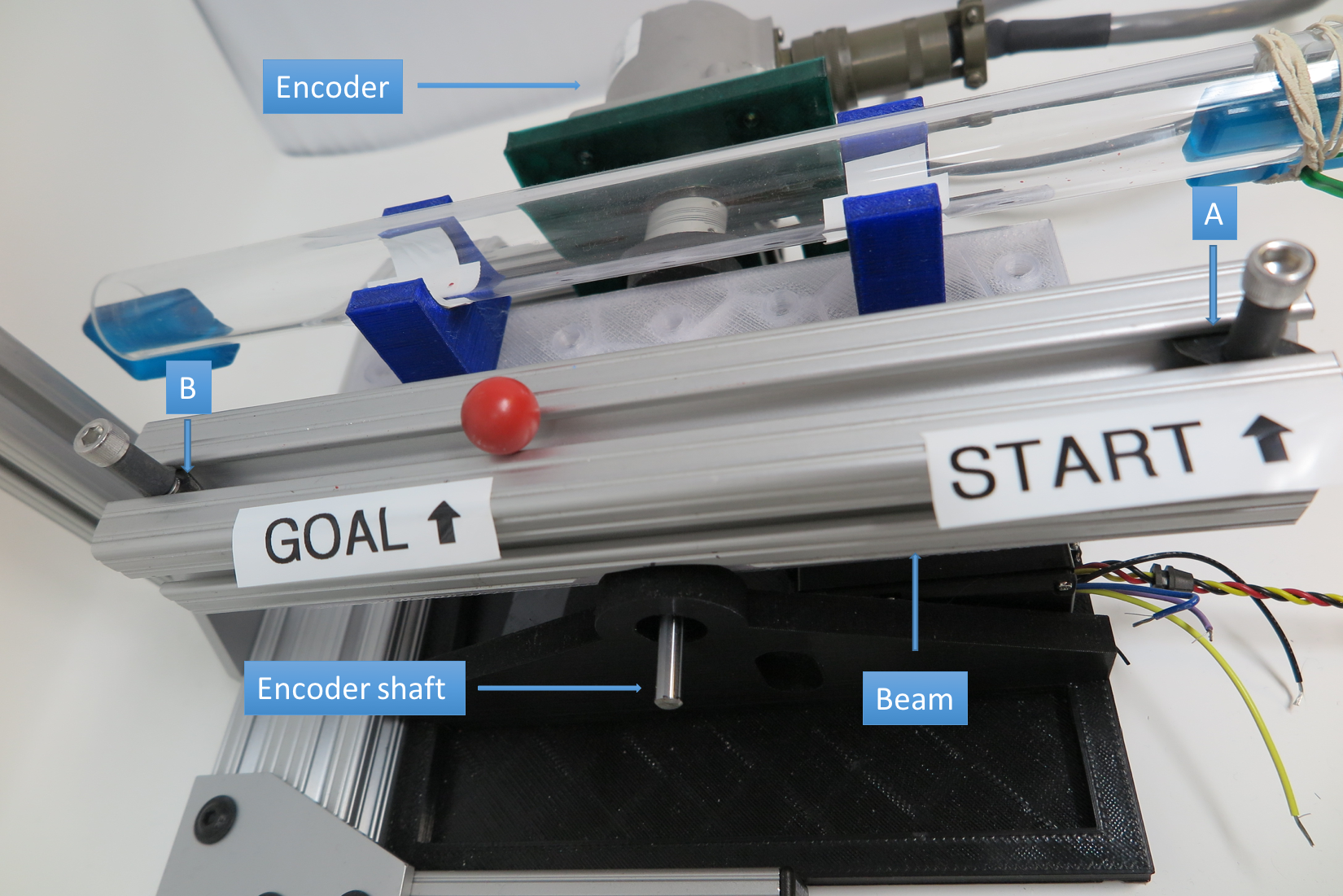}
    \caption{The ball-and-beam system used for the experiments. There is an RGB camera above that measures the location of the ball. The encoder (seen in the figure) measures the angular position of the beam.}
    \label{fig:bb_system}
\end{figure}
The ball-and-beam system can be modeled with 4 state variables $[x, \dot x, \theta, \dot \theta]$, where $x$ is the position of the ball, $\dot x$ is the ball's velocity, $\theta$ is the beam angle in radians, and $\dot \theta$ is the angular velocity of the beam. 
The acceleration of the ball, $\ddot x$, is given by 
\begin{equation}\label{eqn:ball_beam}
    \ddot x = \frac{m_{ball} x \dot \theta^2 - b_1 \dot x - b_2 m_{ball} g \cos(\theta) - m_{ball} g \sin(\theta)}{\frac{I_{ball}}{r^2_{ball}} + m_{ball}},\nonumber
\end{equation}
where $m_{ball}$ is the mass of the ball, $I_{ball}$ is the moment of inertia of the ball, $r_{ball}$ is the radius of the ball, $b_1$ is the coefficient of viscous friction of the ball on the beam, $b_2$ is the coefficient of static (dry) friction of the ball on the beam, and $g$ is the acceleration due to gravity. The beam is actuated by a servo motor (position controlled) and an approximate model for its motion is estimated by fitting an auto-regressive model.
\begin{figure}[!htb] 
	\flushleft
	\includegraphics[width=0.95\columnwidth]{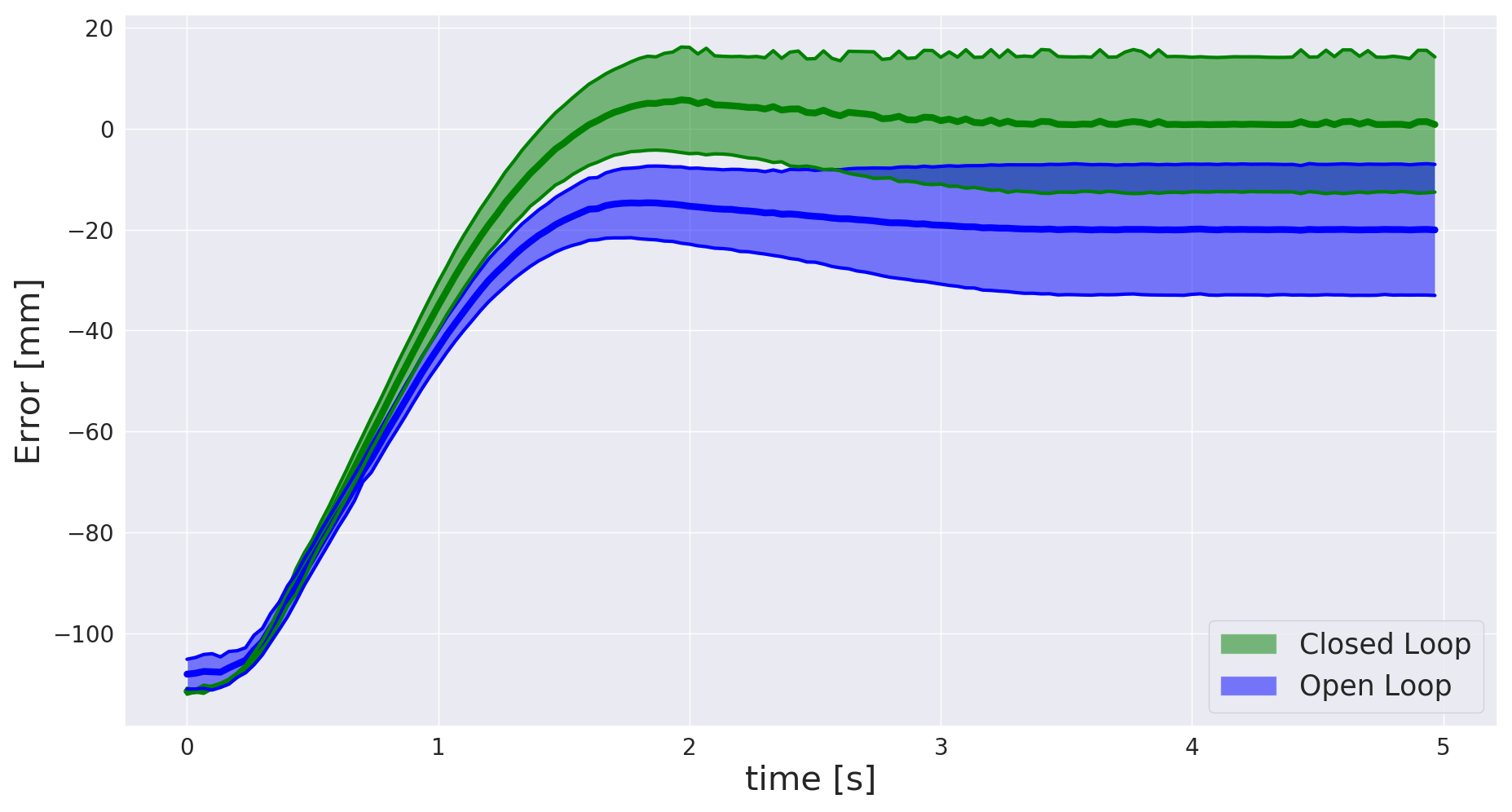}
	\caption{Comparison of the performance of the proposed controller on a ball-and-beam system with the open-loop solution. The plot shows the error in the position of the ball from the regulated position averaged over $12$ runs.}
	\label{fig:ballbeamresult}
\vspace{-2 mm}
\end{figure}
We use this model for the analysis where the ball's rotational inertia is ignored and we approximately estimate the dry friction. The model is inaccurate, as can be seen from the performance of the open-loop controller in Figure~\ref{fig:ballbeamresult}. However, the proposed controller is still able to regulate the ball position at the desired goal showing the stabilizing behavior for the system (see the performance of the closed-loop controller in Figure~\ref{fig:ballbeamresult}). The plot shows the mean and the standard deviation of the error for $12$ runs of the controller. It can be observed that the mean regulation error goes to zero for the closed-loop controller. We believe that the performance of the controller will improve as we improve the model accuracy. In future research, we would like to study the learning behavior for the proposed controller by learning the residual dynamics using GP~\cite{romeres2019semiparametrical}. 
\section{Conclusion and Future Work}\label{sec:conclusions}
This paper presents a method for simultaneously computing an optimal trajectory along with a local, time-invariant stabilizing controller for a dynamical system with known uncertainty bounds. The time-invariant controller was computed by adding a robustness constraint to the trajectory optimization problem. We prove that under certain simplifying assumptions, we can compute the gradient of the robustness constraint so that a gradient-based optimization solver could be used to find a solution for the optimization problem. We tested the proposed approach that shows that it is possible to solve the proposed problem simultaneously. We showed that even a linear parameterization of the stabilizing controller with a linear approximation of the error dynamics allows us to successfully control non-linear systems locally. We tested the proposed method in simulation as well as a physical system. Due to space limitations, we have to skip extra results regarding the behavior of the algorithm.

However, the current approach has two limitations-- it makes linear approximation of dynamics for finding the worst-case deviation, and secondly, the linear parameterization of the stabilizing controller can be limiting for a lot of robotic systems. In future research, we will incorporate these two limitations using Lyapunov theory for non-linear control, for better generalization and more powerful results of the proposed approach. 



\bibliographystyle{IEEEtran}
\bibliography{references}
\end{document}